\documentclass[letterpaper]{article} % DO NOT CHANGE THIS
\usepackage{aaai2026}  % DO NOT CHANGE THIS
\usepackage{times}  % DO NOT CHANGE THIS
\usepackage{helvet}  % DO NOT CHANGE THIS
\usepackage{courier}  % DO NOT CHANGE THIS
\usepackage[hyphens]{url}  % DO NOT CHANGE THIS
\usepackage{graphicx} % DO NOT CHANGE THIS
\usepackage{natbib}  % DO NOT CHANGE THIS AND DO NOT ADD ANY OPTIONS TO IT
\usepackage{caption} % DO NOT CHANGE THIS AND DO NOT ADD ANY OPTIONS TO IT
\usepackage{algorithm}
\usepackage{newfloat}
\usepackage{listings}
\DeclareCaptionStyle{ruled}{labelfont=normalfont,labelsep=colon,strut=off} % DO NOT CHANGE THIS
\floatstyle{ruled}
\newfloat{listing}{tb}{lst}{}
\floatname{listing}{Listing}
\usepackage{url}            % simple URL typesetting
\usepackage{booktabs}       % professional-quality tables
\usepackage{comment}
\usepackage{amscd}
\usepackage{amsmath}
\usepackage{amssymb}
\usepackage{amsthm}
\usepackage{mathrsfs}
\usepackage{mathtools}
\usepackage{algpseudocode}

\usepackage{tikz}
\usepackage{bm}
\usepackage{subfigure}
\usepackage{multirow}
\newcommand{\set}[1]{\{ #1 \}}
\newcommand{\ind}[1]{\mathbb{I}\left[ #1 \right]}

\renewcommand{\epsilon}{\varepsilon}
\newcommand{\bx}{\bm{x}}
\newcommand{\bbeta}{\bm{\beta}}
\newcommand{\pa}{\operatorname{pa}}
\newcommand{\hpa}{\hat{\operatorname{pa}}}

\newtheorem{proposition}{Proposition}

\theoremstyle{definition}

\usepackage{cleveref}
\crefname{algorithm}{Algorithm}{Algorithms}
\crefname{table}{Table}{Tables}
\crefname{figure}{Figure}{Figures}
\crefname{section}{Section}{Sections}
\crefname{problem}{Problem}{Problems}
\crefname{theorem}{Theorem}{Theorems}
\crefname{proposition}{Proposition}{Propositions}
\crefname{lemma}{Lemma}{Lemmas}
\crefname{remark}{Remark}{Remarks}

\title{
Sparse Additive Model Pruning for Order-Based Causal Structure Learning
}
\author{
    Kentaro Kanamori, %\textsuperscript{\rm 1}
    Hirofumi Suzuki, %\textsuperscript{\rm 1}
    Takuya Takagi %\textsuperscript{\rm 1}
}
\affiliations{
    Artificial Intelligence Laboratory, Fujitsu Limited\\
    \{k.kanamori, suzuki-hirofumi, takagi.takuya\}@fujitsu.com
}

\begin{document}

\maketitle

\begin{abstract}
Causal structure learning, also known as causal discovery, aims to estimate causal relationships between variables as a form of a causal directed acyclic graph (DAG) from observational data. One of the major frameworks is the order-based approach that first estimates a topological order of the underlying DAG and then prunes spurious edges from the fully-connected DAG induced by the estimated topological order. Previous studies often focus on the former ordering step because it can dramatically reduce the search space of DAGs. In practice, the latter pruning step is equally crucial for ensuring both computational efficiency and estimation accuracy. Most existing methods employ a pruning technique based on generalized additive models and hypothesis testing, commonly known as CAM-pruning. However, this approach can be a computational bottleneck as it requires repeatedly fitting additive models for all variables. Furthermore, it may harm estimation quality due to multiple testing. To address these issues, we introduce a new pruning method based on sparse additive models, which enables direct pruning of redundant edges without relying on hypothesis testing. We propose an efficient algorithm for learning sparse additive models by combining the randomized tree embedding technique with group-wise sparse regression. Experimental results on both synthetic and real datasets demonstrated that our method is significantly faster than existing pruning methods while maintaining comparable or superior accuracy. 
\end{abstract}

% Uncomment the following to link to your code, datasets, an extended version or similar.
% You must keep this block between (not within) the abstract and the main body of the paper.
% \begin{links}
%     \link{Code}{https://aaai.org/example/code}
%     \link{Datasets}{https://aaai.org/example/datasets}
%     \link{Extended version}{https://aaai.org/example/extended-version}
% \end{links}

\begin{figure*}
    \centering  
    \includegraphics[width=0.8\textwidth]{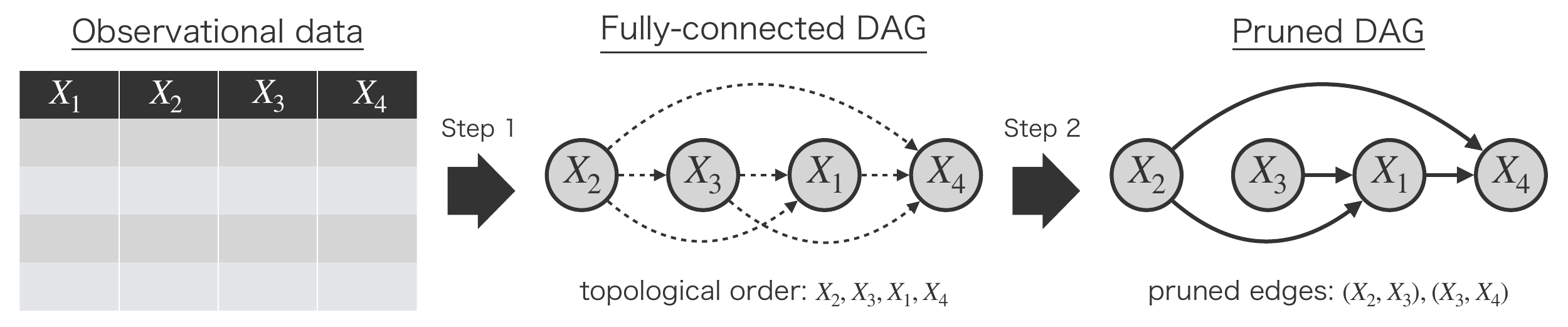}
    \caption{
        An overview of the order-based causal structure learning algorithm. 
        Given an observational dataset, it first estimates a topological order of the underlying causal DAG. 
        Then, it prunes spurious edges from the fully-connected DAG induced by the estimated topological order. 
        This paper focuses on the latter step and aims to propose an efficient and accurate pruning method. 
    }
    \label{fig:intro:overview}
\end{figure*}

\section{Introduction}
In several scientific fields, such as biology and economics, it is often important to identify causal relationships between variables in observational data. 
\emph{Causal structure learning}, also known as \emph{causal discovery}, aims to estimate them as a form of a \emph{causal directed acyclic graph (DAG)}~\cite{Pearl:2009}. 
A causal DAG enables us to understand the relationships between variables and to predict the effect of interventions on the variables, which are crucial for decision-making in various applications~\cite{Peters:2017:causal}.

One of the promising algorithmic frameworks for causal structure learning is the \emph{order-based approach}~\cite{Teyssier:UAI2005}. 
Under some conditions on the data-generating process, previous studies have shown that the underlying causal DAG is identifiable from purely observational data~\cite{Peters:JMLR2014}. 
However, we need to search for a causal graph while ensuring it is acyclic, which incurs super-exponential computational costs in the number of variables~\cite{Chickering:1996}. 
To alleviate this issue, the order-based approach first estimates a topological order of the underlying causal DAG and then prunes spurious edges from the fully-connected DAG induced by the estimated topological order, as illustrated in \cref{fig:intro:overview}. 
By estimating a topological order in advance, we can obtain a causal DAG without explicitly imposing the acyclicity constraint, which reduces the search space dramatically~\cite{Rolland:ICML2022}. 

While existing studies often focus on the former ordering step, in practice, the latter pruning step is equally crucial for ensuring both computational efficiency and estimation accuracy. 
Most existing methods employ a pruning algorithm based on generalized additive models (GAMs)~\cite{Hastie:SS1986}, known as \emph{CAM-pruning}~\cite{Buhlmann:AS2014}. 
It first fits a GAM for regressing each variable on its candidate parents, and then identifies redundant parents by hypothesis testing for the fitted GAM~\cite{Marra:CSDA2011}. 
However, the computational cost of fitting GAMs is generally expensive, especially for high-dimensional cases. 
Since CAM-pruning needs to repeatedly fit GAMs for all variables, it often becomes the bottleneck of the entire algorithm~\cite{Rolland:ICML2022,Montagna:CLEAR2023}. 
Furthermore, CAM-pruning also requires repeating hypothesis testing, which can degrade the estimation accuracy due to multiple testing~\cite{Huang:KDD2018}. 
% These facts suggest that the existing order-based algorithms have room for improvement by developing an efficient pruning method that can directly prune redundant edges without hypothesis testing. 

The goal of this paper is to propose an alternative pruning method that addresses the aforementioned limitations of existing pruning methods. 
% Based on these observations, our goal is to propose an alternative pruning method that addresses the aforementioned limitations of existing pruning methods. 
It enables us to accelerate the existing order-based causal structure learning algorithms without compromising their estimation quality.

\subsubsection{Our Contributions}
In this paper, we propose a new pruning method for order-based causal structure learning. 
Our key idea is to learn a \emph{sparse additive model}~\cite{Ravikumar:JRSSB2009} that regresses each variable on its candidate parents, which enables us to directly prune redundant candidate parents without requiring hypothesis testing. 
To accelerate the process of fitting sparse additive models, we propose a new framework, named \emph{Sparse Additive Randomized TRee Ensemble~(SARTRE)}, by combining the randomized tree embedding and group-wise sparse regression techniques. 
% An overview of our SARTRE framework is illustrated in \cref{fig:alg:overview}.
% 
Our contributions are summarized as follows:
\begin{itemize}
    \item 
    We introduce a new efficient framework for learning a sparse additive model, named SARTRE. 
    We consider a special case of the additive model, where the shape function for each variable is expressed as a linear combination of weighted indicator functions over a set of intervals. 
    We propose to generate a set of intervals by \emph{randomized tree embedding}~\cite{Moosmann:NIPS2007}, and show that we can efficiently learn the sparse weight vector via \emph{group lasso regression}~\cite{Yuan:JRSS2006}. 
    \item 
    We propose an efficient pruning method for order-based causal structure learning by leveraging our SARTRE framework. 
    Given an estimated topological order, our method can efficiently prune redundant edges from the fully-connected DAG induced by the estimated topological order without requiring hypothesis testing. 
    Our method can be combined with any causal ordering algorithm, such as SCORE~\cite{Rolland:ICML2022}. 
    \item 
    By numerical experiments on synthetic and real datasets, we demonstrated that our method achieved a significant speedup compared to existing pruning methods, including CAM-pruning~\cite{Buhlmann:AS2014}. 
    Furthermore, our method achieved comparable or superior accuracy to existing methods, suggesting that it can be a promising alternative to current pruning methods. 
\end{itemize}

\section{Preliminaries}

\subsection{Causal Structure Learning}
\emph{Causal structure learning}, also known as \emph{causal discovery}, aims to estimate a \emph{causal graph} that expresses the causal relationships between variables from observational data. 
For a set of variables $[d] \coloneqq \set{1, \dots, d}$, we consider an \emph{additive noise model~(ANM)} with a directed acyclic graph (DAG) $\mathcal{G}$~\cite{Peters:JMLR2014}. 
More precisely, we assume that a random variable $X = (X_1, \dots, X_d) \in \mathbb{R}^d$ is generated by the following \emph{structural equation model} for each $i \in [d]$: 
\begin{align*}
    X_i = f_i(X_{\pa(i)}) + \varepsilon_i,
\end{align*}
where $X_{\pa(i)}$ is a vector of variables that are parents of $X_i$ in $\mathcal{G}$, $f_i$ is a deterministic link function, and $\varepsilon_i \sim \mathcal{N}(0, \sigma_i^2)$ is an independent Gaussian noise variable. 

% Existing methods often assume that the link functions $f_i$ are nonlinear and twice continuously differentiable in every component~\cite{Rolland:ICML2022}. 
As with the previous studies~\cite{Peters:JMLR2014,Rolland:ICML2022}, we assume that the link functions $f_i$ are nonlinear and twice continuously differentiable in every component.
Under mild conditions, the ANM defined above is known to be \emph{identifiable}; that is, the underlying causal DAG $\mathcal{G}$ can be uniquely recovered from observational data generated according to the joint distribution of $X$~\cite{Peters:JMLR2014}. 
Motivated by this fact, several algorithms for estimating the causal DAG $\mathcal{G}$ from observational data have been proposed so far~\cite{Buhlmann:AS2014,Montagna:CLEAR2023,Xu:NeurIPS2024}.

\begin{figure*}[t]
    \centering
    \subfigure[Model Overview]{
        \includegraphics[width=0.6125\linewidth]{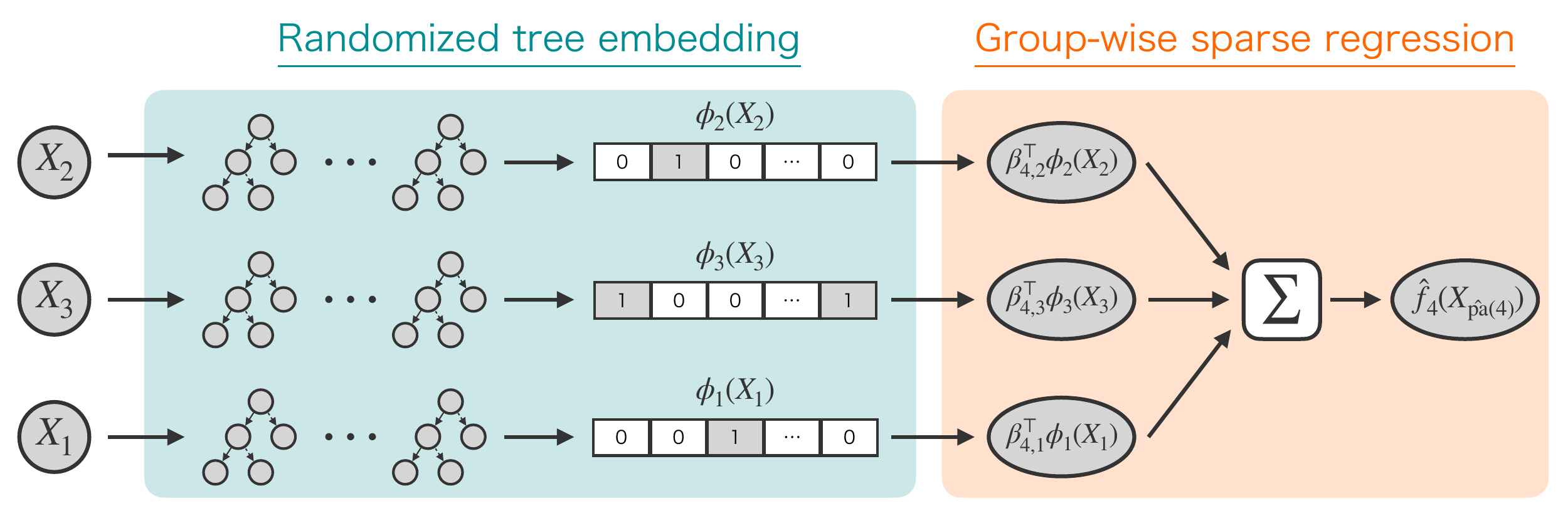}
    }
    \qquad
    \subfigure[Shape Functions]{
        \includegraphics[width=0.2\linewidth]{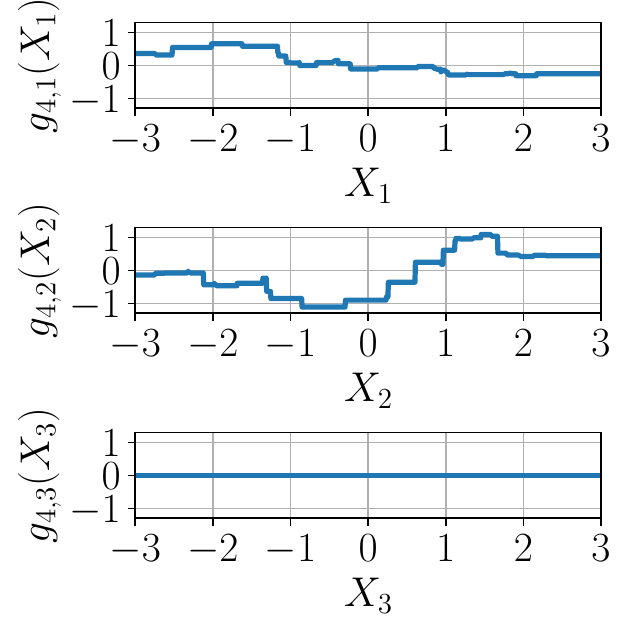}
    }
    \caption{
        An example of our SARTRE framework. 
        We consider the same example as in \cref{fig:intro:overview}, where the estimated topological order $\hat{\pi} = (2, 3, 1, 4)$ is given and we consider to identify the parents of the variable $X_4$ from its candidate parents $\hpa(4) = \set{2, 3, 1}$. 
        (a)~Our method first generates binary representation vectors $(\phi_2(X_2), \phi_3(X_3), \phi_1(X_1))$ by randomized tree embedding.  
        Then, it learns an additive model $\hat{f}_4(X_{\hpa(4)}) = g_{4, 2}(X_2) + g_{4, 3}(X_3) + g_{4, 1}(X_1)$, where each shape function is defined by $g_{4, j}(X_j) = \bbeta_{4, j}^\top \phi_j(X_j)$. 
        By optimizing the coefficient vectors $(\bbeta_{4, 2}, \bbeta_{4, 3}, \bbeta_{4, 1})$ through group lasso regression, we can obtain a sparse additive model $\hat{f}_4$. 
        (b)~For each shape function $g_{4, j}$, if $\bbeta_{4, j} = \bm{0}$ holds, we have $g_{4, j}(X_j) = 0$ for any $X_j$, which enables us to prune the corresponding candidate parent $X_j$. 
        In this example, $\bbeta_{4, 3} = \bm{0}$ holds and thus we can prune $X_3$. 
    }
    \label{fig:alg:overview}
\end{figure*}

\subsection{Order-Based Causal Structure Learning}
One of the major frameworks for estimating a causal DAG $\mathcal{G}$ is the \emph{order-based approach}~\cite{Teyssier:UAI2005}. 
Given an observational sample $S = \set{ \bm{x}_1, \dots, \bm{x}_n} \subseteq \mathbb{R}^d$, it divides the task of estimating $\mathcal{G}$ into two steps: 
(1)~estimating a topological order of $\mathcal{G}$, and
(2)~pruning redundant edges from the fully-connected DAG induced by the estimated topological order. 
\cref{fig:intro:overview} presents an overview of the order-based approach. 
Previous studies often focus on the former step and proposed several methods for estimating a topological order from a given observational sample $S$. 
For the latter pruning step, most of the existing methods employ a nonlinear variable selection algorithm based on \emph{generalized additive models (GAMs)}~\cite{Hastie:SS1986}. 

\subsubsection{Ordering Step}
Given a sample $S$, the ordering step aims to estimate a topological order $\pi$ of the underlying DAG $\mathcal{G}$. 
A topological order $\pi$ is expressed as a permutation over $[d]$ such that, for any $i, j \in [d]$, $\pi(i) < \pi(j)$ holds if $X_j$ is a descendant of $X_i$ in $\mathcal{G}$. 
Existing methods often estimate a topological order in a bottom-up greedy manner that iteratively identifies a leaf variable, i.e., a variable that is not a parent of any other remaining variables~\cite{Peters:JMLR2014}. 

One of the state-of-the-art methods is the \emph{SCORE} algorithm~\cite{Rolland:ICML2022}, which identifies a leaf variable by leveraging the score matching technique. 
For the underlying joint distribution $p(x)$ of $X$, let $s(x) \coloneqq \nabla \log p(x)$ be the logarithmic gradient of $p(x)$, which is known as the \emph{score function}. 
Then, \cite{Rolland:ICML2022} proved that $X_i$ is a leaf variable if and only if the variance of $\frac{\partial s_i(X)}{\partial x_i}$ is zero. 
Based on this fact, SCORE identifies a leaf variable $X_l$ by
\begin{align*}
    l = \arg {\min}_{i \in [d]} \operatorname{Var}_{X} \left[ \frac{\partial s_i(X)}{\partial x_i} \right]. 
\end{align*}
Note that $\operatorname{Var}_{X} \left[ \frac{\partial s_i(X)}{\partial x_i} \right]$ can be estimated by the second-order Stein gradient estimator~\cite{Li:ICLR2018} with a given sample $S$. 
After identifying a leaf variable $X_l$, SCORE removes it from the set of variables and repeats this procedure for the remaining variables. 
When all variables are removed, we can obtain an estimated topological order $\hat{\pi}$. 

\subsubsection{Pruning Step}
Given an estimated topological order $\hat{\pi}$ and $S$, the pruning step aims to remove spurious edges and identify the true parents of each variable $X_i$ in the underlying DAG $\mathcal{G}$. 
Once a topological order $\hat{\pi}$ is determined, we can construct the fully-connected DAG by adding a directed edge from $X_j$ to $X_i$ for each $j, i \in [d]$ such that $\hat{\pi}(j) < \hat{\pi}(i)$ holds.
Let $\hpa(i) \coloneqq \set{j \in [d] \mid \hat{\pi}(j) < \hat{\pi}(i)}$ be the set of \emph{candidate parents} of $X_i$ with respect to $\hat{\pi}$. 
Then, the task of the pruning step is to select the true parents of each $X_i$ from $\hpa(i)$, which can be regarded as a variable selection problem.
Most existing methods employ a variable selection method based on GAMs, known as \emph{CAM-pruning}~\cite{Buhlmann:AS2014}. 
The idea behind CAM-pruning is to assume that the link function $f_i$ can be expressed as an additive model. 
It first fits an additive model that regresses $X_i$ on its candidate parents $X_{\hpa(i)}$:
\begin{align*}
    \hat{f}_i(X_{\hpa(i)}) = {\sum}_{j \in \hpa(i)} g_{i, j}(X_j),
\end{align*}
where $g_{i, j}$ is a nonlinear \emph{shape function} for each $j \in \hpa(i)$. 
Then, CAM-pruning selects a subset of $\hpa(i)$ by hypothesis testing whether $g_{i, j}$ is significantly different from zero for each $j \in \hpa(i)$. 
That is, it removes each candidate parent $X_j$ if the null hypothesis $g_{i, j}(X_j) = 0$ is accepted. 

While CAM-pruning is known as a powerful method, it faces two critical challenges. 
First, while it needs to fit an additive model for each variable $X_i$, repeating this procedure for all variables can be computationally expensive, especially in high-dimensional cases~\cite{Montagna:CLEAR2023}. 
Second, it requires hypothesis testing for all pairs of each variable and its candidate parent, which can harm the correctness of pruning due to multiple testing~\cite{Huang:KDD2018}. 
To address these issues, the aim of this paper is to propose an alternative pruning method that can be applied to high-dimensional datasets while avoiding multiple testing.

\section{Algorithm}
In this section, we propose an efficient and accurate pruning method for order-based causal structure learning. 
We assume that we are given a topological order $\hat{\pi}$ over variables $[d]$ estimated from an observational sample $S = \set{ \bm{x}_1, \dots, \bm{x}_n} \subseteq \mathbb{R}^d$ by some algorithm. 
Note that our method can be combined with any existing ordering algorithm, including CAM~\cite{Buhlmann:AS2014}, SCORE~\cite{Rolland:ICML2022}, and CaPS~\cite{Xu:NeurIPS2024}. 
Given an estimated topological order $\hat{\pi}$, we aim to identify the true parents $\pa(i)$ for each variable $X_i$ by removing spurious variables from its candidate parents $\hpa(i)$ given by $\hat{\pi}$. 

Our main idea is to learn a \emph{sparse additive model}~\cite{Ravikumar:JRSSB2009} that regresses each variable $X_i$ on a small subset of its candidate parents $X_{\hpa(i)}$. 
By learning a sparse additive model, we can directly prune redundant candidate parents without relying on hypothesis testing. 
However, fitting a sparse additive model for each variable $X_i$ can be computationally expensive, as is the case with CAM-pruning. 
Since we need to repeat this procedure for all variables, the computational cost can be prohibitive in practice. 
To address this issue, we propose a new framework for efficiently learning sparse additive models by combining the \emph{randomized tree embedding}~\cite{Moosmann:NIPS2007} and \emph{group lasso regression}~\cite{Yuan:JRSS2006}.

\subsection{Sparse Additive Randomized Tree Ensemble}
First, we introduce \emph{Sparse Additive Randomized TRee Ensemble (SARTRE)}, a new framework for learning sparse additive models. 
We consider a special case of the additive model, where each shape function is expressed as a linear combination of weighted indicator functions over a set of intervals. 
More precisely, we consider an additive model $\hat{f}_i(X_{\hpa(i)}) = \sum_{j \in \hpa(i)} g_{i, j}(X_j)$ whose shape functions $g_{i, j}$ are defined as follows:
\begin{align*}
    g_{i, j}(X_j) = {\sum}_{k=1}^{l_j} \beta_{i, j, k} \cdot \phi_{j, k}(X_j),
\end{align*}
where $l_j$ is the total number of intervals for $X_j$, $\phi_{j, k}(X_j) = \ind{X_j \in r_{j, k}}$ is an indicator function with respect to the $k$-th interval $r_{j, k} \subset \mathbb{R}$, and $\beta_{i, j, k}$ is a coefficient for $r_{j, k}$. 
Each interval $r_{j, k}$ is expressed as $r_{j, k} = (a_{j, k}, b_{j, k}]$ with lower and upper bounds $a_{j, k}, b_{j, k} \in \mathbb{R}$. 
For notational convenience, we denote by $\phi_j(X_j) \coloneqq (\phi_{j, 1}(X_j), \dots, \phi_{j, l_j}(X_j)) \in \set{0,1}^{l_j}$, which can be regarded as an embedding vector of $X_j$ using the set of intervals $R_j \coloneqq \set{r_{j, 1}, \dots, r_{j, l_j}}$. 
Then, our shape function can be expressed by $g_{i, j}(X_j) = \bbeta_{i, j}^\top \phi_{j}(X_j)$, where $\bbeta_{i, j} \coloneqq (\beta_{i, j, 1}, \dots, \beta_{i, j, l_j}) \in \mathbb{R}^{l_j}$ is a coefficient vector for the pair of a variable $X_i$ and its candidate parent $X_j$. 

To learn our additive model $\hat{f}_i$, we need to generate a set of intervals $R_j$ and optimize the coefficient vector $\bbeta_{i, j}$ for each $j \in \hpa(i)$. 
By definition, it is easy to see that $g_{i, j}(X_j) = 0$ if $\beta_{i, j, k} = 0$ holds for all $k \in [l_j]$. 
It implies that we can obtain a sparse additive model by forcing $\bbeta_{i, j} = \bm{0}$ for as many candidate parents $j \in \hpa(i)$ as possible. 
Based on this observation, our SARTRE consists of the following two steps: 
(1)~\emph{randomized tree embedding}, which efficiently generates a set of intervals $R_j$ by an unsupervised manner; 
(2)~\emph{group-wise sparse regression}, which optimizes each coefficient vector $\bbeta_{i, j}$ by group lasso regression. 
\cref{fig:alg:overview} illustrates an overview of our SARTRE framework.

\begin{figure}[t]
    \centering  
    \includegraphics[width=\linewidth]{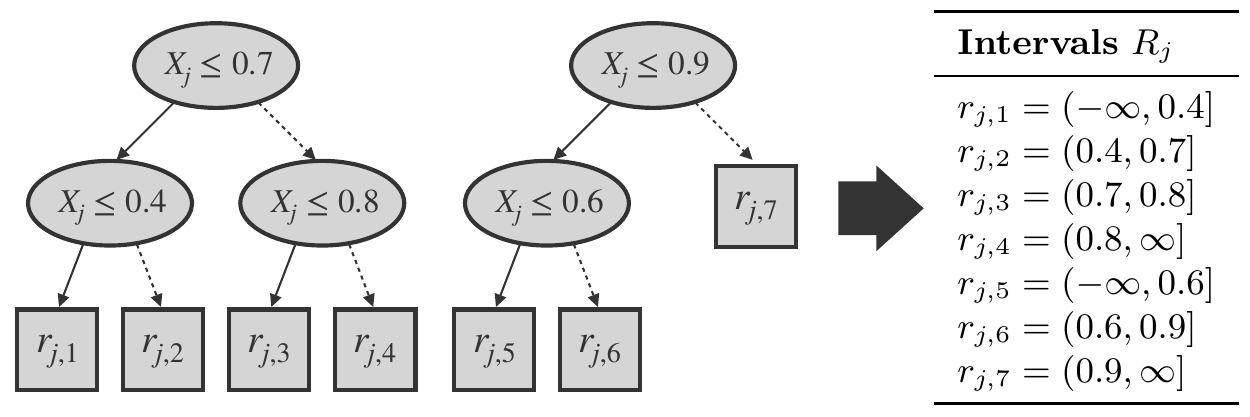}
    \caption{
        An example of the tree embedding technique. 
        Given an ensemble of decision trees that takes $X_j$ as input, each leaf $k$ of a tree corresponds to an interval $r_{j, k}$ of $X_j$. 
        Thus, we can obtain a set of intervals $R_j$ by collecting intervals corresponding to the leaves in a given tree ensemble. 
    }
    \label{fig:alg:tree}
\end{figure}

\subsubsection{Randomized Tree Embedding}
To generate a set of intervals $R_j$ for each $j \in \hpa(i)$, we employ the embedding technique based on tree ensemble models~\cite{Moosmann:NIPS2007}. 
A tree ensemble model consists of decision trees that partition the input space and assign a constant value to each partition.  
Given a tree ensemble $h_j \colon \mathbb{R} \to \mathbb{R}$ that takes one single variable $X_j$ as input, each leaf $k$ of a decision tree in the ensemble corresponds to an interval $r_{j, k}$. 
It suggests that we can obtain $R_j$ by collecting intervals corresponding to the leaves in a given tree ensemble $h_j$. 
Such a technique is known as \emph{tree embedding}~\cite{Moosmann:NIPS2007}, which enables us to easily express nonlinear relationships~\cite{Friedman:AAS2008,Feng:AAAI2018}. 
\cref{fig:alg:tree} illustrates a running example of how to extract a set of intervals $R_j$ from a tree ensemble $h_j$. 

There exist several possible ways to obtain a tree ensemble $h_j$. 
One straightforward approach is to train a tree ensemble $h_j$ that regresses $X_i$ on $X_j$ by some supervised learning algorithm, such as random forests~\cite{Breiman:ML2001} or gradient boosting~\cite{Friedman:AS2000}. 
To accelerate our generating process, we employ an ensemble of completely randomized trees~\cite{Geurts:ML2006}. 
Because these trees are constructed by randomly selecting a split point for each node without any target variable, we can generate $R_j$ more efficiently than the supervised approach. 
Note that the intervals $R_j$ that are generated in such an unsupervised and randomized manner can not always be optimal in terms of the regression performance. 
However, since we optimize the corresponding coefficient vectors $\bbeta_{i, j}$ in the next step, they may not harm the overall performance if we can set the total number of intervals to be sufficiently large~\cite{Geurts:ML2006}. 

\begin{algorithm}[t]
    \caption{SARTRE-pruning}
    \label{alg:sartre}
    \small
    \begin{algorithmic}[1]
        \For{$j = 1, \dots, d$} \Comment{\emph{Randomized Tree Embedding}} \label{line:tree:start}
            \State Fit a randomized tree ensemble $h_j$;
            \State Extract $R_j = \set{r_{j, 1}, \dots, r_{j, l_j}}$ from $h_j$; 
        \EndFor \label{line:tree:end}
        \State Construct a fully-connected DAG $\hat{\mathcal{G}}$ induced by $\hat{\pi}$. 
        \For{$i = 1, \dots, d$} \Comment{\emph{Group-wise Sparse Regression}} \label{line:lasso:start}
            \State $\hat{\bbeta}_i \leftarrow \arg\min_{\bbeta_i \in \mathbb{R}^{p_i}} \mathcal{L}(\bbeta_i) + \lambda \cdot \sum_{j \in \hpa(i)} \|\bbeta_{i, j}\|_2$; 
            \For{$j \in \hpa(i)$}
                \If{$\hat{\beta}_{i, j, 1} = \dots = \hat{\beta}_{i, j, l_j} = 0$}
                    \State Remove the edge $(X_j, X_i)$ from $\hat{\mathcal{G}}$; 
                \EndIf
            \EndFor
        \EndFor \label{line:lasso:end}
        \State \textbf{return} $\hat{\mathcal{G}}$
    \end{algorithmic}
\end{algorithm}

\subsubsection{Group-Wise Sparse Regression}
Given the generated intervals $R_j$ for each $j \in \hpa(i)$, we optimize the corresponding coefficient vectors $\bbeta_{i, j}$ by the group-wise sparse regression technique~\cite{Yuan:JRSS2006,Massias:ICML2018}. 
Let $p_i = \sum_{j \in \hpa(i)} l_j$ be the total number of intervals for all candidate parents of $X_i$. 
We denote the concatenated embedding and coefficient vectors by $\Phi_i(\bx) \coloneqq (\phi_j(x_j))_{j \in \hpa(i)} \in \set{0, 1}^{p_i}$ and $\bbeta_i \coloneqq (\bbeta_{i, j})_{j \in \hpa(i)} \in \mathbb{R}^{p_i}$, respectively. 
Then, our additive model can be expressed as $\hat{f}_i(\bx_{\hpa(i)}) = \bbeta_{i}^\top \Phi_{i}(\bx)$.
It implies that our additive model can be regarded as a linear model over embedding $\Phi_i$ with a coefficient vector $\bbeta_i$ if the intervals $R_j$ are fixed. 
In addition, our coefficient vector $\bbeta_i$ has a group structure, where each group corresponds to one candidate parent of $X_i$, and we aim to encourage group-wise sparsity in $\bbeta_i$ for obtaining a sparse additive model. 

By leveraging the facts mentioned above, we can fit a coefficient vector $\bbeta_i$ by \emph{group lasso regression}~\cite{Yuan:JRSS2006}. 
Our problem can be formulated as follows:
\begin{align*}
    {\min}_{\bbeta_i \in \mathbb{R}^{p_i}} \; \mathcal{L}(\bbeta_i) + \lambda \cdot {\sum}_{j \in \hpa(i)} \|\bbeta_{i, j}\|_2,
\end{align*}
where $\mathcal{L}(\bbeta_i) \coloneqq \sum_{m=1}^{n} \left( x_{m, i} - \bbeta_{i}^\top \Phi_{i}(\bx_{m}) \right)^2$ and $\lambda > 0$ is a regularization parameter. 
While the first term of the objective function is the standard squared loss, the second term is the group lasso penalty that encourages group-wise sparsity in $\bbeta_i$. 
We can solve the above optimization problem by existing efficient algorithms, such as block-coordinate descent~\cite{Friedman:arxiv2010} or dual extrapolation~\cite{Massias:ICML2018}.

\begin{figure*}[t]
    \centering
    \includegraphics[width=0.875\linewidth]{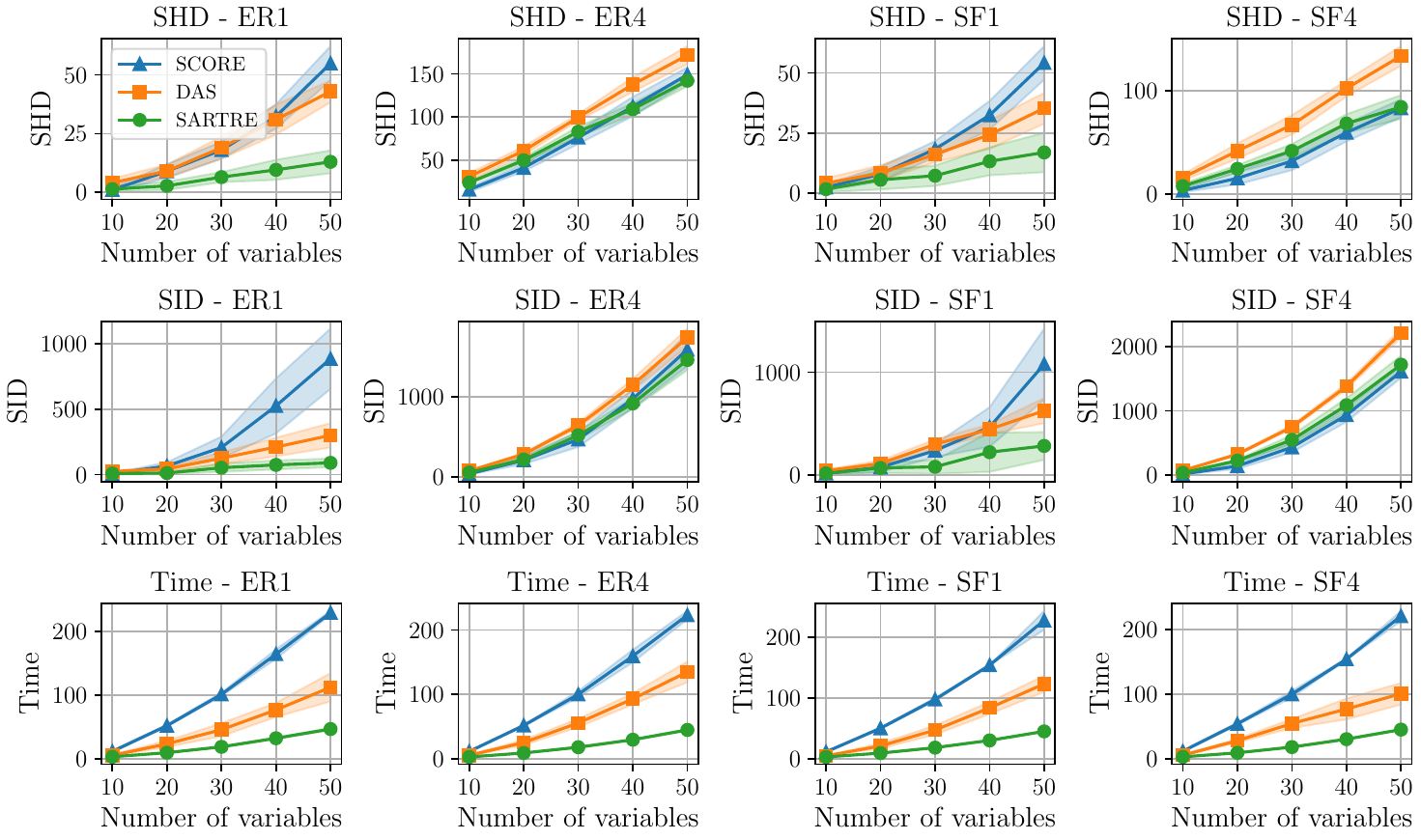}
    \caption{
        Experimental results of baseline comparison on the synthetic datasets. 
        For all the metrics, smaller values are better. 
        The shaded areas indicate the standard deviations over $10$ trials.
        We varied the number of variables $d$ from $10$ to $50$. 
        Our SARTRE was significantly faster than the baselines while maintaining comparable or superior SHD and SID. 
    }
    \label{fig:exp:baseline:variable}
\end{figure*}
\subsection{Overall Pruning Algorithm}
We now propose a pruning algorithm for order-based causal structure learning, based on our SARTRE framework. 
\cref{alg:sartre} presents our algorithm, named \emph{SARTRE-pruning}. 
Given an estimated topological order $\hat{\pi}$ and a sample $S$, it first generates a set of intervals $R_j$ for each variable $X_j$ by randomized tree embedding. 
Then, for each variable $X_i$, it trains a sparse additive model $\hat{f}_i(X_{\hpa(i)})$ by optimizing the coefficient vector $\hat{\bbeta}_i$ through group lasso regression. 
For a candidate parent $j \in \hpa(i)$, if all the coefficients $\hat{\beta}_{i, j, k}$ are zero, we conclude that $X_j$ is not a parent of $X_i$ and prune the edge $(X_j, X_i)$ from the fully-connected DAG $\hat{\mathcal{G}}$ induced by $\hat{\pi}$. 
Finally, our algorithm returns the pruned DAG $\hat{\mathcal{G}}$. 

Our SARTRE-pruning has several advantages compared to the existing pruning methods~\cite{Buhlmann:AS2014,Montagna:CLEAR2023}. 
In terms of computational efficiency, our algorithm generates a set of intervals $R_j$ for each variable in an unsupervised manner (lines~\ref{line:tree:start}--\ref{line:tree:end}) before the pruning procedure.  
Since the generated intervals $R_j$ are independent of any target variable, we can use $R_j$ for all the variables whose candidate parents include $X_j$. 
Thus, all we need in our pruning procedure is to optimize the coefficient vector $\hat{\bbeta}_i$ for each $X_i$ (lines~\ref{line:lasso:start}--\ref{line:lasso:end}), which is more efficient than fitting an additive model from scratch. 
Furthermore, our algorithm obtains a sparse additive model for each variable $X_i$ that encourages as many shape functions $g_{i, j}$ as possible to be zero~\cite{Ravikumar:JRSSB2009}. 
It enables us to directly identify spurious edges without multiple testing.

\subsection{Theoretical Analysis}
Finally, we discuss the representation ability of our additive model compared to the standard one used in CAM-pruning. 
As a shape function, existing implementations of CAM-pruning often employ a \emph{smoothing spline}~\cite{Hastie:2009:Elements}, which is a smooth piece-wise polynomial function.  
On the other hand, our shape function $g_{i, j}$ does not include polynomial terms and only consists of a linear combination of weighted indicator functions over a set of intervals. 
Contrary to such a simple structure, we show that our shape function has the potential to express any continuous function arbitrarily well in \cref{prop:representation}. 

\begin{proposition}\label{prop:representation}
    For a variable $X_j$, we assume $X_j \in [a_j, b_j]$ for some $a_j < b_j$. 
    Then, for any continuous function $g^\ast \colon [a_j, b_j] \to \mathbb{R}$ and $\epsilon > 0$, there exist our shape function $g_{i, j}$ such that $\max_{x_j \in [a_j, b_j]} |g^\ast(x_j) - g_{i, j}(x_j)| < \epsilon$ holds. 
\end{proposition}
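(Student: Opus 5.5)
The plan is to approximate $g^\ast$ by a step function on a fine uniform partition of $[a_j, b_j]$, using uniform continuity. Then I will check that this step function is an instance of our shape function $g_{i,j}(x_j) = \sum_{k=1}^{l_j} \beta_{i,j,k}\,\phi_{j,k}(x_j)$ for a suitable choice of intervals $R_j$ and coefficients.

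First, since $[a_j, b_j]$ is compact and $g^\ast$ is continuous, the Heine--Cantor theorem makes $g^\ast$ uniformly continuous. So there is $\delta > 0$ such that $|x - x'| < \delta$ implies $|g^\ast(x) - g^\ast(x')| < \epsilon$. Choose an integer $L$ with $(b_j - a_j)/L < \delta$, and set grid points $t_k = a_j + k (b_j - a_j)/L$ for $k = 0, \dots, L$.

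Define intervals as half-open cells, matching the paper's convention $r_{j,k} = (a_{j,k}, b_{j,k}]$:
\begin{align*}
r_{j,1} = (t_0 - 1, t_1], \qquad r_{j,k} = (t_{k-1}, t_k] \ \text{ for } k = 2, \dots, L.
\end{align*}
Enlarging the first cell to start at $t_0 - 1$ ensures the endpoint $a_j$ is covered. These cells are pairwise disjoint and cover $[a_j, b_j]$, so every $x_j \in [a_j, b_j]$ has exactly one $k$ with $\phi_{j,k}(x_j) = 1$. Set $l_j = L$ and $\beta_{i,j,k} = g^\ast(t_k)$, the value at the right endpoint of the cell.

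For $x_j \in r_{j,k} \cap [a_j, b_j]$ we then get $g_{i,j}(x_j) = g^\ast(t_k)$. Moreover $|x_j - t_k| \le (b_j - a_j)/L < \delta$, so $|g^\ast(x_j) - g_{i,j}(x_j)| < \epsilon$. Since $[a_j,b_j]$ is compact and $g^\ast - g_{i,j}$ takes finitely many continuous pieces, the maximum is attained and remains strictly below $\epsilon$.

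The only delicate point is bookkeeping, not analysis. The maximum must be attained and strictly less than $\epsilon$, which follows because each cell's closure is compact and $g^\ast$ is continuous there. The endpoint $a_j$ must also be covered despite the half-open convention, which the enlarged first cell handles.

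The model class also allows overlapping intervals coming from several trees. This is harmless, because the construction only needs one admissible set $R_j$. The partition above is itself realizable as the leaves of a single decision tree on $X_j$ with splits at $t_1, \dots, t_{L-1}$, so the approximating function lies in the class produced by tree embedding.
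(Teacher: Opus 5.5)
Your proof is correct and is essentially the paper's argument (uniform continuity, half-open cells of mesh below $\delta$, and on each cell the value of $g^\ast$ at an endpoint), and your enlarged first cell $(t_0-1,t_1]$ repairs a slip in the paper's version, where $x_j=a_j=\gamma_1$ lies in no cell $(\gamma_p,\gamma_{p+1}]$. One small correction: with your right-endpoint values $g^\ast(t_k)$, the maximum of $|g^\ast-g_{i,j}|$ over $[a_j,b_j]$ need not be attained (on $(t_{k-1},t_k]$ the supremum may only be approached as $x_j\to t_{k-1}^{+}$), so either read $\max$ as $\sup$, which your closed-cell compactness bound keeps strictly below $\epsilon$, or use left-endpoint values $g^\ast(t_{k-1})$ as the paper does, so that the deviation tends to $0$ at the left end of each cell and the maximum is attained.
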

\begin{proof}[Proof (sketch)]
    By definition, our shape function $g_{i, j}$ can be expressed as a \emph{piece-wise constant function}, which is a universal approximator for any continuous function~\cite{Cybenko:MCSS1989}. 
    This fact implies the existence of $g_{i, j}$ that can approximate a given continuous function $g^\ast$ arbitrarily well. 
\end{proof}

\cref{prop:representation} suggests that our additive model potentially has a rich representation ability, while it is restricted to a simple shape function compared to CAM-pruning. 
Unfortunately, it is not trivial to determine the sufficient number of intervals $l_j$ and learn the appropriate coefficients $\bbeta_{i, j}$. 
While we leave them for future work, in the next section, we empirically demonstrate that our method works well in practice. 

% On the other hand, the shape function of our additive model can be regarded as a \emph{piece-wise constant function}; that is, our shape function $g_{i, j}$ can be expressed as follows:
% \begin{align*}
%     g_{i, j}(X_j) = {\sum}_{p=1}^{q} \alpha_{p} \cdot \ind{X_j \in (\gamma_{p-1}, \gamma_{p}]},
% \end{align*}
% where $\gamma_{0} < \gamma_{1} < \dots < \gamma_{q}$ are the sorted boundaries of the intervals $R_j$ and $\alpha_{p}$ is a coefficient with respect to the interval $(\gamma_{p-1}, \gamma_{p}]$, which can be computed by combining its corresponding subset of the original coefficients $\bbeta_{i, j}$. 

\begin{figure}[t]
    \centering
    \includegraphics[width=0.83\linewidth]{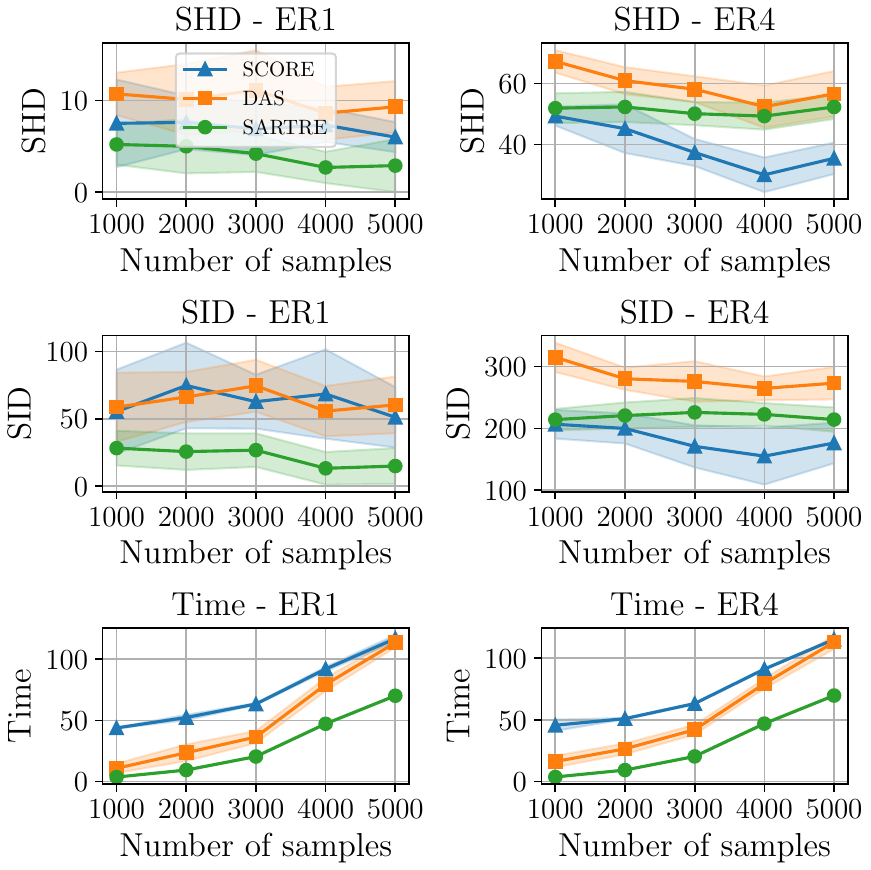}
    \caption{
        Experimental results of baseline comparison by varying the number of samples $n$ from $1000$ to $5000$ on ER1 and ER4 datasets. 
        Our SARTRE was faster than the baselines without significantly degrading SHD and SID. 
    }
    \label{fig:exp:baseline:sample}
\end{figure}
\begin{figure}[t]
    \centering
    \includegraphics[width=0.83\linewidth]{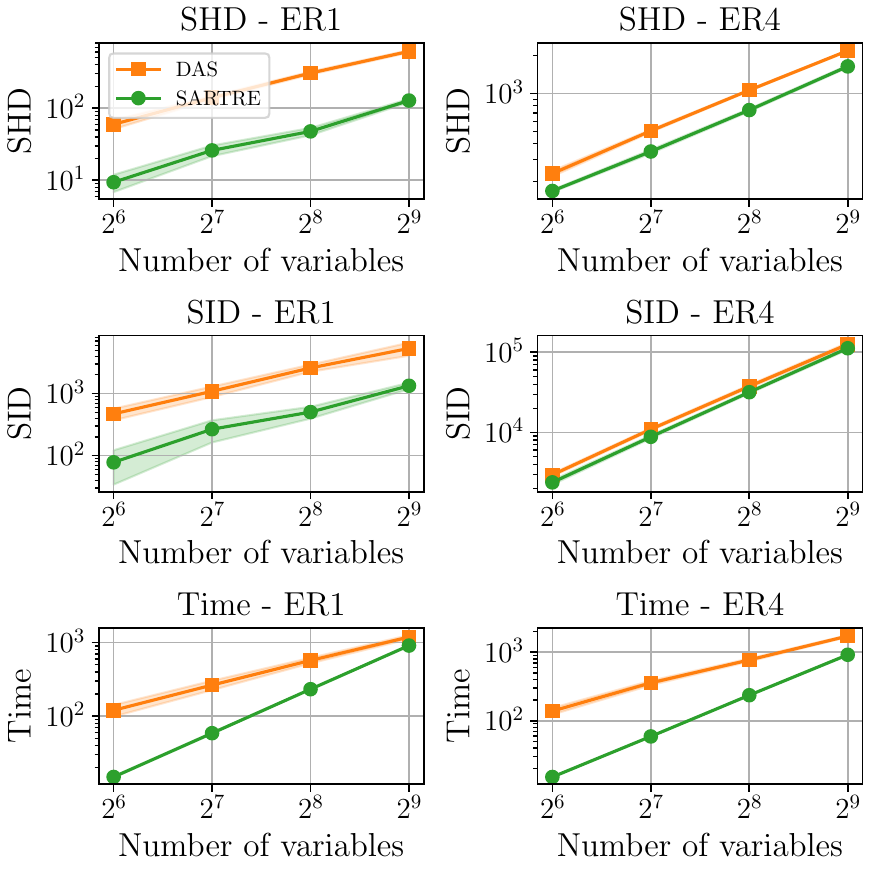}
    \caption{
        Experimental results of the high-dimensional cases on ER1 and ER4 datasets. 
        We varied the number of variables by $d \in \set{64, 128, 256, 512}$. 
        We observed that our SARTRE was faster and attained better SHD and SID than DAS. 
    }
    \label{fig:exp:highdim}
\end{figure}

\section{Experiments}
To investigate the efficacy of our framework, we conducted numerical experiments on synthetic and real datasets. 
All the code was implemented in Python 3.10. 
All the experiments were conducted on macOS Sequoia with Apple M2 Ultra CPU and 128 GB memory. 
Due to page limitations, the complete results are shown in Appendix.

\subsection{Experimental Setup}
\subsubsection{Datasets}
We examine the performance of our method on synthetic datasets generated from a nonlinear ANM with Gaussian noise. 
Following~\cite{Rolland:ICML2022}, we generated link functions $f_i$ by sampling Gaussian processes with a unit bandwidth RBF kernel. 
We considered two types of causal DAGs: \emph{Erdős–Rényi (ER)} and \emph{scale-free (SF)} models. 
For a fixed number of variables $d$, we varied the sparsity of the sampled graph by setting the average number of edges to be $d$ or $4d$. 
We also used real and semi-real datasets: \emph{Sachs}~\cite{Sachs:Science2005} and \emph{fMRI}~\cite{Smith:NI2011}. 

\subsubsection{Baselines}
We compare our method (\emph{SARTRE}) with two existing baselines. 
One baseline is \emph{SCORE}~\cite{Rolland:ICML2022}, which estimates a topological order by the score matching and then prunes spurious edges by CAM-pruning. 
Another baseline is \emph{DAS}~\cite{Montagna:CLEAR2023}, which is a fast variant of SCORE that filters redundant candidate parents before applying CAM-pruning. 
Some other methods, such as CAM~\cite{Buhlmann:AS2014}, NOTEARS~\cite{Zheng:NIPS2018,Zheng:AISTATS2020}, and GraNDAG~\cite{Lachapelle:ICLR2020}, are not included in our comparison since they were outperformed by SCORE in \cite{Rolland:ICML2022}. 
We implemented our SARTRE by combining the ordering step of SCORE and \cref{alg:sartre}. 
Note that all methods use the same ordering step, and their differences are only in the pruning step. 
For SARTRE, we set $\lambda = 0.1$ because it performed the best in our sensitivity analyses, which are shown in Appendix. 
% and present its sensitivity analyses in Appendix. 
We also set the total number of trees and the maximum leaf size of each tree in the ensemble $h_j$ to be $5$ and $8$, respectively, which means that the total number of intervals $l_j$ is at most $40$. 
We repeated each experiment $10$ times and report the average values of (i)~structural Hamming distance~(SHD), (ii)~structural interventional distance~(SID)~\cite{Peters:NC2015}, and (iii)~running time [s] for each method.

\subsection{Experimental Results}
\subsubsection{Baseline Comparison}
First, we evaluate the performance of each method by varying the number of variables $d$. 
\cref{fig:exp:baseline:variable} shows the average SHD, SID, and running time of each method on the synthetic datasets with $n = 2000$. 
We can see that our SARTRE was significantly faster than the baselines, as the number of variables $d$ approached $50$. 
Furthermore, while SARTRE maintained comparable SHD and SID to the baselines on ER4 and SF4, it achieved the best SHD and SID on ER1 and SF1. 
It indicates that SARTRE was more accurate than the baselines when the underlying causal graph was sparse, while it remained competitive on dense graphs. 
In summary, we confirmed that \emph{our SARTRE achieved significant speedup compared to the baselines while maintaining comparable or superior accuracy}. 

We also compare each method by varying the number of samples $n$. 
\cref{fig:exp:baseline:sample} presents the results on ER1 and ER4 datasets with $d = 20$. 
We observed that the running time of DAS approached that of SCORE as the number of samples $n$ increased, while SARTRE remained faster than both baselines. 
In terms of accuracy, SCORE attained better SHD and SID than SARTRE on ER4. 
One possible reason is that hypothesis testing of CAM-pruning remains accurate for large $n$ relatively to $d$ and dense DAGs. 
Nevertheless, SARTRE outperformed SCORE on ER1 and DAS on both ER1 and ER4.
These results indicate that \emph{our SARTRE often performed well with respect to the number of samples as well}.

\subsubsection{High-Dimensional Case}
Next, we examine our method in high-dimensional cases.
We varied $d$ by $\set{64, 128, 256, 512}$ and fixed $n = 2000$. 
Due to the high dimensionality, we omitted the ordering step of each method and used the ground truth topological order as input. 
\cref{fig:exp:highdim} shows the results on ER1 and ER4 datasets for DAS and SARTRE. 
We observed that SARTRE was faster than DAS while maintaining better SHD and SID in all cases. 
From these results, we confirmed that \emph{our SARTRE performed better than the existing scalable algorithm even in the high-dimensional cases}.

\subsubsection{Real Datasets}
Finally, we evaluate our method on real and semi-real datasets. 
We sampled a subset of each dataset by bootstrap sampling with $n = 2000$. 
\cref{tab:exp:real} shows the average SHD, SID, and running time of each method on Sachs and fMRI datasets over 10 trials. 
We observed that SARTRE was faster than the baselines without significantly degrading SHD and SID. 
These results suggest that \emph{our SARTRE performed well on real datasets, as well as synthetic datasets}.

\begin{table}[t]
    \centering
    \small
    \begin{tabular}{lcccccc}
        \toprule
            & \multicolumn{3}{c}{\textbf{Sachs}} & \multicolumn{3}{c}{\textbf{fMRI}} \\
            \cmidrule(lr){2-4} \cmidrule(lr){5-7}
            \textbf{Method} & SHD & SID & Time & SHD & SID & Time \\
        \midrule
            SCORE & 43.2 & 102.4 & 14.7 & 19.6 & 71.8 & 11.4 \\
            DAS & 27.6 & 70.3 & 6.42 & \textbf{11.6} & \textbf{58.6} & 4.75 \\
            SARTRE & \textbf{22.7} & \textbf{58.0} & \textbf{3.62} & 12.9 & 60.0 & \textbf{3.18} \\
        \bottomrule
    \end{tabular}
    \caption{
        Experimental results on real datasets. 
        We sampled each dataset by bootstrap sampling, and repeated this procedure for 10 trials. 
        We can see that our SARTRE was faster than the baselines while maintaining comparable accuracy. 
    }
    \label{tab:exp:real}
\end{table}

\section{Related Work}
\emph{Causal structure learning}, also referred to as \emph{causal discovery}, has been a fundamental task in the field of statistics, machine learning, and artificial intelligence for decades~\cite{Pearl:2009,Peters:2017:causal}. 
Existing studies can be categorized into three main approaches and mixtures of them: \emph{constraint-based}~\cite{Spirtes:SSCR1991}, \emph{score-based}~\cite{Chickering:JMLR2003}, and \emph{functional model-based methods}~\cite{Shimizu:JMLR2006}. 
This paper focuses on functional model-based methods, and in particular, considers the \emph{additive noise model (ANM)}~\cite{Peters:JMLR2014}, which is a popular and widely studied model in the literature. 

To estimate a causal DAG from observational data, several algorithms have been proposed so far~\cite{Shimizu:JMLR2011,Ghoshal:AISTATS2018,Zheng:NIPS2018,Cai:AAAI2018,Lachapelle:ICLR2020,Zheng:AISTATS2020}. 
In this paper, we focus on the \emph{order-based approach}~\cite{Teyssier:UAI2005}, which first estimates a topological order of the underlying causal DAG and then prunes spurious edges from the fully-connected DAG induced by the estimated order. 
Existing studies often focus on the former step and proposed several methods for estimating a topological order, such as \emph{CAM}~\cite{Buhlmann:AS2014}, \emph{RESIT}~\cite{Peters:JMLR2014}, \emph{SCORE}~\cite{Rolland:ICML2022}, and so on~\cite{Xu:NeurIPS2024,Wang:IJCAI2021,Sanchez:ICLR2023}. 
For the latter step, most of these methods employ \emph{CAM-pruning}~\cite{Buhlmann:AS2014} that identifies spurious edges by nonlinear feature selection based on GAMs. 
One exception is \emph{DAS} proposed by \cite{Montagna:CLEAR2023}, which accelerates CAM-pruning by removing redundant candidate parents in advance by leveraging the score matching~\cite{Rolland:ICML2022}. 
In contrast, this paper proposes an alternative pruning method that aims to address the limitations of CAM-pruning in terms of efficiency and accuracy. 
We demonstrated that our method can significantly accelerate existing order-based causal structure learning algorithms without compromising accuracy, enabling us to apply them to more high-dimensional settings.  

Our SARTRE can be regarded as a new \emph{nonlinear variable selection} method, as well as a new learning algorithm for \emph{sparse additive models}~\cite{Ravikumar:JRSSB2009,Haris:JMLR2022}. 
It consists of the \emph{randomized tree embedding}~\cite{Moosmann:NIPS2007,Geurts:ML2006,Feng:AAAI2018} and \emph{group lasso regression}~\cite{Yuan:JRSS2006,Massias:ICML2018}. 
While frameworks for learning GAMs with tree-based shape functions exist~\cite {Lou:KDD2013}, they are not designed to encourage sparsity. 
Since variable selection plays a crucial role in many applications, our SARTRE has the potential to be applied to various tasks beyond causal structure learning~\cite{Marra:CSDA2011}.

\section{Conclusion}
This paper proposed a new pruning method that enhances the efficiency and accuracy of nonlinear order-based causal structure learning. 
To address the limitations of the existing pruning method based on additive models with hypothesis testing, we introduced a new framework, named SARTRE, for learning a sparse additive model by combining the randomized tree embedding and group-wise sparse regression techniques. 
Our method can efficiently learn a sparse additive model for each variable and its candidate parents, which enables us to directly prune redundant edges without hypothesis testing. 
Experimental results demonstrated that our method was significantly faster than the existing methods while maintaining comparable or superior accuracy. 

\subsubsection{Limitations and Future Work}
% There exist several directions for making our method more practical. 
One limitation of our method is that we need to determine some hyperparameters in advance. 
% , such as the regularization parameter $\lambda$ and the number of intervals $l_j$. 
In our experiments, we used the same values for $\lambda$ and $l_j$ across all settings and confirmed that our method stably performed well in all situations. 
However, it would be beneficial to develop a method that can automatically tune these values based on the data. 
Another limitation is the lack of theoretical guarantees for our pruning quality. 
While we showed that our SARTRE has a rich representation ability in \cref{prop:representation}, guaranteeing the correctness of pruning remains an open problem.  
Finally, we need to examine our method in more general and practical settings. 
For example, investigating the performance of our method in the presence of \emph{latent confounders} is important for future work.

\section*{Ethical Statement}
Our proposed method, named SARTRE, is a new pruning algorithm for nonlinear causal structure learning from observational data. 
We believe that our method can be applied to enhance the understanding of complex systems across various fields, including biology, economics, and the social sciences. 
It enables researchers to discover causal relationships in data that may not be readily observable, leading to more informed decision-making. 
We acknowledge that the use of causal structure learning methods can have significant ethical implications, particularly in sensitive areas such as healthcare or criminal justice. 
To prevent potential misuse, we need to ensure that our method is used responsibly and ethically. 
Overall, we believe that our method can have a positive impact on society by enhancing the understanding of complex systems and facilitating better decision-making, provided it is used adequately.

\section*{Acknowledgments}
We wish to thank Yuta Fujishige, Shun Yanashima, and Ryosuke Ozeki for making a number of valuable suggestions. 
We also thank the anonymous reviewers for their insightful comments.

\bibliography{ref}

\newpage
\appendix

\section{Omitted Proof}
\begin{proof}[Proof of Propsition 1]
    First, we show that our shape function $g_{i, j}$ is a piece-wise constant function. 
    For a set of intervals $R_j = \set{r_{j, 1}, \dots, r_{j, l_j}}$ with $r_{j, k} = (a_{j, k}, b_{j, k}]$ for $k \in [l_j]$, let $\Gamma = \set{\gamma_{j, 1}, \dots, \gamma_{j, q_j}}$ be the sorted set of all the boundaries $a_{j, k}, b_{j, k}$ in $R_j$, where $q_j$ is the total number of unique boundaries.     
    Then, we can express $g_{i, j}$ as follows:
    \begin{align*}
        g_{i, j}(X_j) = {\sum}_{p=1}^{q_{j}-1} \alpha_{p} \cdot \ind{X_j \in (\gamma_{p}, \gamma_{p+1}]},
    \end{align*}
    where $\alpha_p$ is the sum of the coefficients $\beta_{i, j, k}$ such that $r_{j, k} \cap (\gamma_{p}, \gamma_{p+1}] \not= \emptyset$. 
    The above result shows that $g_{i, j}$ is regarded as a piece-wise constant function. 

    Next, we show that a piece-wise constant function is a universal approximator of any continuous function over a bounded interval $[a_j, b_j] \subset \mathbb{R}$. 
    For any continuous function $g^\ast: [a_j, b_j] \to \mathbb{R}$ and $\varepsilon > 0$, there exists $\delta > 0$ such that $|g^\ast(x) - g^\ast(x')| < \varepsilon$ for any $x, x' \in [a_j, b_j]$ with $|x - x'| < \delta$. 
    Here, we set $\Gamma = \set{\gamma_{j, 1}, \dots, \gamma_{j, q_j}}$ such that $\gamma_{p+1} - \gamma_{p} < \delta$ for all $p \in [q_j-1]$, $\gamma_1 = a$, and $\gamma_{q_j} = b$ hold. 
    We also set $\alpha_p = g^\ast(\gamma_{p})$ for all $p \in [q_j]$. 
    Then, for any $x_j \in [a_j, b_j]$, there exists $p \in [q_j-1]$ such that $x_j \in (\gamma_{p}, \gamma_{p+1}]$ and $|x_j - \gamma_{p}| \leq \gamma_{p+1} - \gamma_{p} < \delta$. 
    By combining the above two results, we have $|g^\ast(x_j) - g(x_j)| = |g^\ast(x_j) - \alpha_p| = |g^\ast(x_j) - g^\ast(\gamma_{p})| < \varepsilon$. 
    Since it holds for any $x_j \in [a_j, b_j]$, we have $\max_{x_j \in [a_j, b_j]} |g^\ast(x_j) - g(x_j)| < \varepsilon$, which concludes the proof. 
\end{proof}

\section{Complete Experimental Results}
\subsection{Baseline Comparison}
\cref{fig:appendix:exp:baseline:variable,fig:appendix:exp:baseline:sample} show the complete experimental results of the baseline comparison on the synthetic datasets.

\subsection{High-Dimensional Cases}
\cref{fig:appendix:exp:highdim} shows the complete experimental results of the high-dimensional cases. 

\subsection{Mixture of Linear and Nonlinear Link Functions}
To further investigate the performance of our method, we conducted experiments on the cases where the link functions are a mixture of linear and nonlinear functions. 
We generated each link function $f_i$ as either a linear function or a nonlinear function with probability $0.5$. 
Instead of SCORE, we employed \emph{CaPS}~\cite{Xu:NeurIPS2024} as the ordering method since it is a variant of SCORE that can handle both linear and nonlinear link functions. 

\cref{fig:appendix:exp:mixed} shows the experimental results. 
We observed a similar trend to the previous experiments shown in the main paper. 
These results suggest that our SARTRE performed similarly even when the link functions are a mixture of linear and nonlinear functions. 

\subsection{Sensitivity Analysis}
We conducted sensitivity analyses on the regularization parameter $\lambda$ by varying it from $0.1$ to $0.3$ with a step size of $0.05$. 
In addition to the average SHD and SID, we also measured the average values of F1 score, precision, and recall. 

\cref{fig:appendix:exp:sensitivity:feature,fig:appendix:exp:sensitivity:sample} show the experimental results. 
We can see that $\lambda = 0.1$ attained the best SHD, SID, F1 score, and recall in most cases. 
It suggests that there exists a parameter setting for $\lambda$ that can stably achieve better performance across different situations. 
However, we observed that the precision of $\lambda = 0.1$ was often lower than the others, which is likely due to the fact that a larger $\lambda$ leads to a sparser graph. 
We also observed that the performance of $\lambda = 0.1$ was often worse than the others when $n = 1000$, which is likely due to overfitting. 
We leave the further investigation of the sensitivity of $\lambda$ and how to automatically tune $\lambda$ based on a given dataset to future work.

\section{Additional Comments on Existing Assets}
All the code used in our experiments was implemented in Python 3.10 with scikit-learn 1.5.2\footnote{\url{https://github.com/scikit-learn/scikit-learn}}, pyGAM 0.9.1\footnote{\url{https://github.com/dswah/pyGAM}}, and celer 0.7.3\footnote{\url{https://github.com/mathurinm/celer}}. 
Scikit-learn 1.5.2 and celer 0.7.3 are publicly available under the BSD-3-Clause license. 
pyGAM 0.9.1 is publicly available under the Apache License 2.0. 
% All the scripts and datasets are available in the anonymous GitHub repository at \url{https://anonymous.4open.science/r/orbcastle-40F5}. 
All the real datasets used in our experiments are publicly available and do not contain any identifiable information or offensive content. 
As they are accompanied by appropriate citations in the main body, see the corresponding references for more details.
All the experiments were conducted on macOS Sequoia with Apple M2 Ultra CPU and 128 GB memory. 

% \section{Discussion on Societal Impact}
% In this work, we proposed a new method for nonlinear causal structure learning from observational data. 
% We believe that our method can be used to improve the understanding of complex systems in various fields, such as biology, economics, and social sciences. 
% It enables researchers to discover causal relationships in data that may not be easily observable, leading to better decision-making. 
% We acknowledge that the use of causal structure learning methods can have ethical implications, especially when used in sensitive areas such as healthcare or criminal justice. 
% To avoid potential misuse, we need to ensure that our method is used responsibly and ethically. 
% Overall, we believe that our method can have a positive impact on society by improving the understanding of complex systems and enabling better decision-making as long as it is used adequately. 

\begin{figure*}[p]
    \centering
    \includegraphics[width=0.95\linewidth]{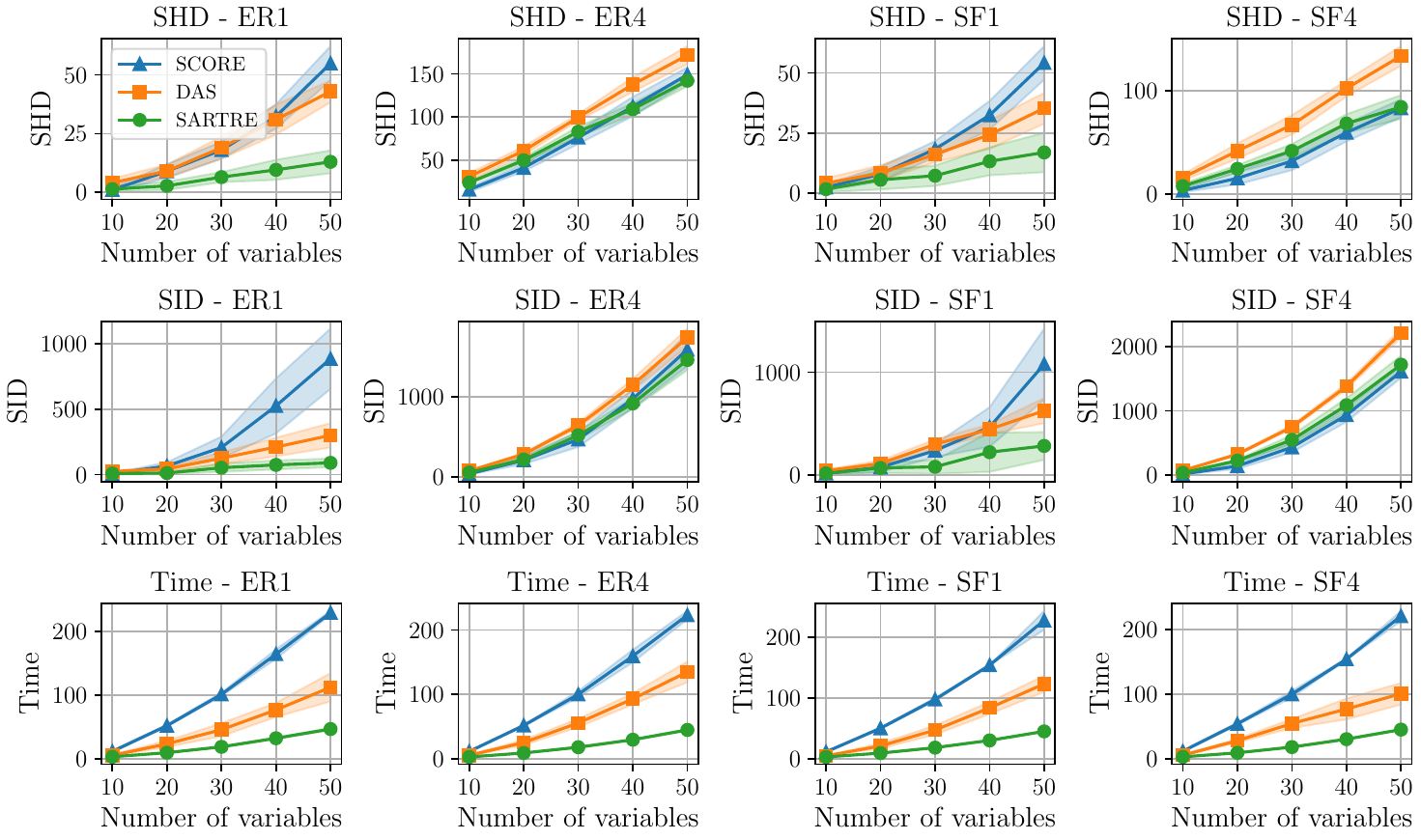}
    \caption{
        Experimental results of baseline comparison on the synthetic datasets. 
        The shaded areas indicate the standard deviations over $10$ trials.
        We varied the number of variables $d$ from $10$ to $50$ and fixed the number of samples $n$ to $2000$.
    }
    \label{fig:appendix:exp:baseline:variable}
\end{figure*}
\begin{figure*}[p]
    \centering
    \includegraphics[width=0.95\linewidth]{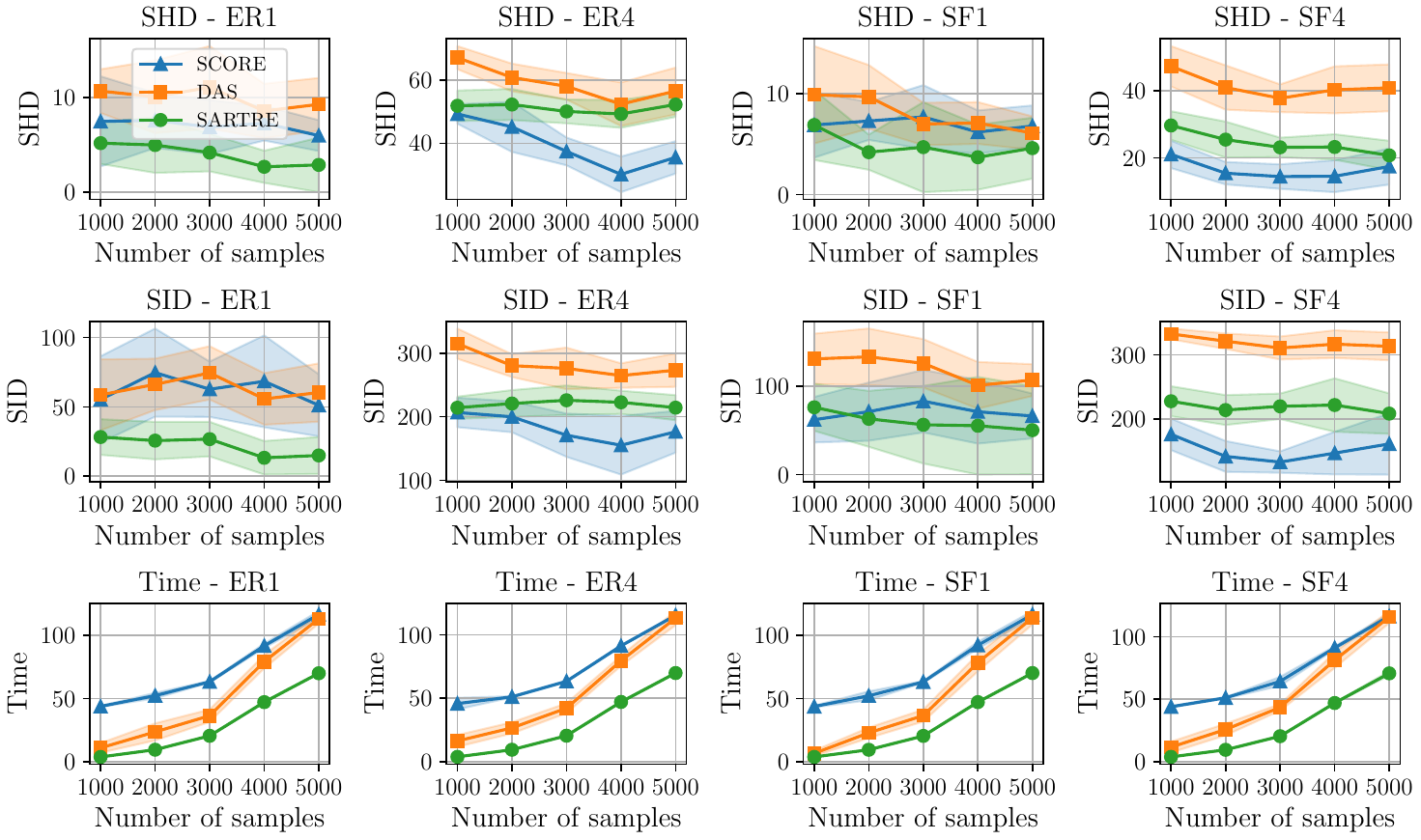}
    \caption{
        Experimental results of baseline comparison on the synthetic datasets. 
        The shaded areas indicate the standard deviations over $10$ trials.
        We varied the number of samples $n$ from $1000$ to $5000$ and fixed the number of variables $d$ to $20$.
    }
    \label{fig:appendix:exp:baseline:sample}
\end{figure*}
\begin{figure*}[p]
    \centering
    \includegraphics[width=\linewidth]{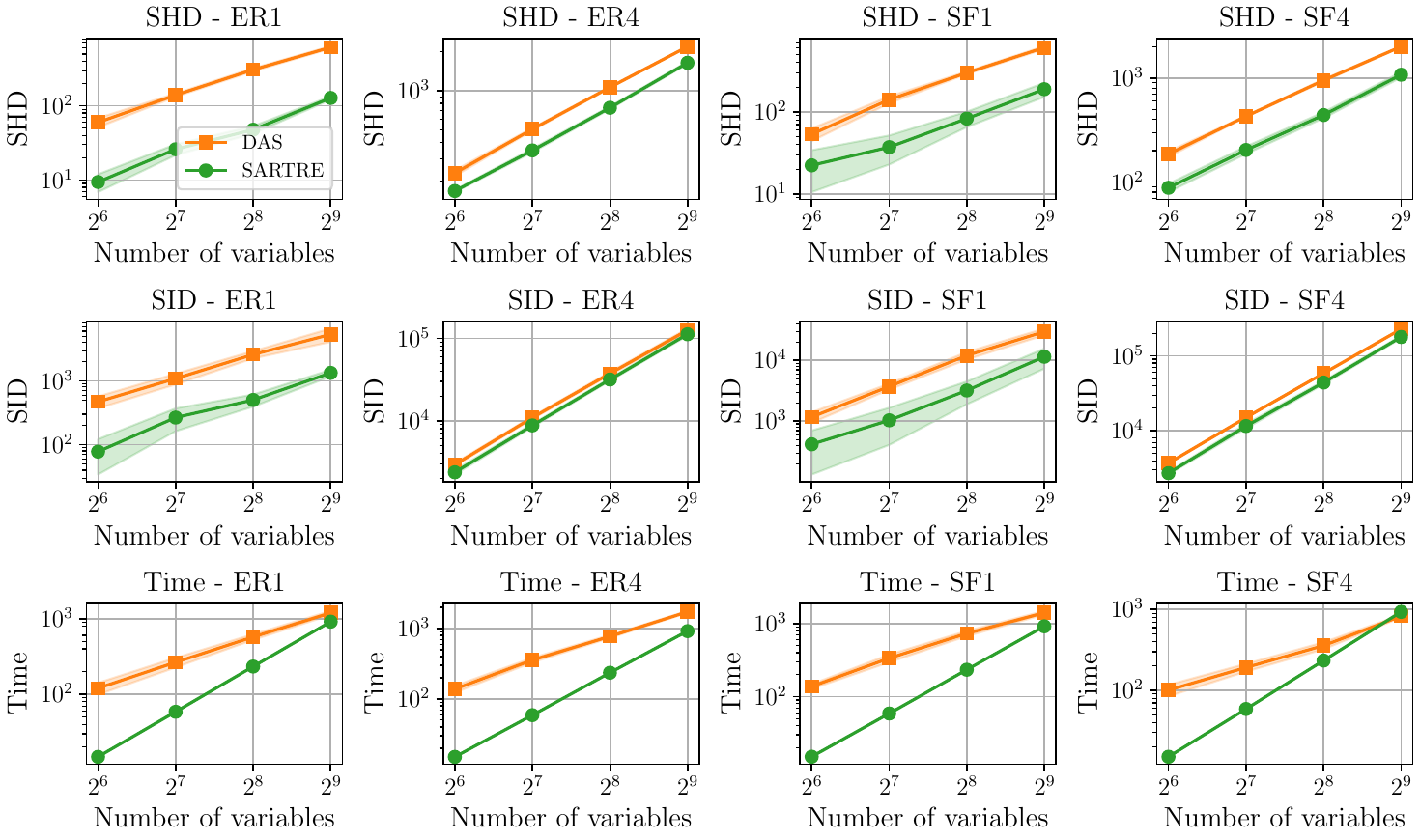}
    \caption{
        Experimental results of the high-dimensional cases on ER1 and ER4 datasets. 
        The shaded areas indicate the standard deviations over $10$ trials.
        We varied the number of variables $d \in \set{64, 128, 256, 512}$ and fixed the number of samples $n$ to $2000$. 
    }
    \label{fig:appendix:exp:highdim}
\end{figure*}
\begin{figure*}[p]
    \centering
    \subfigure[Variables]{
        \includegraphics[width=0.9\linewidth]{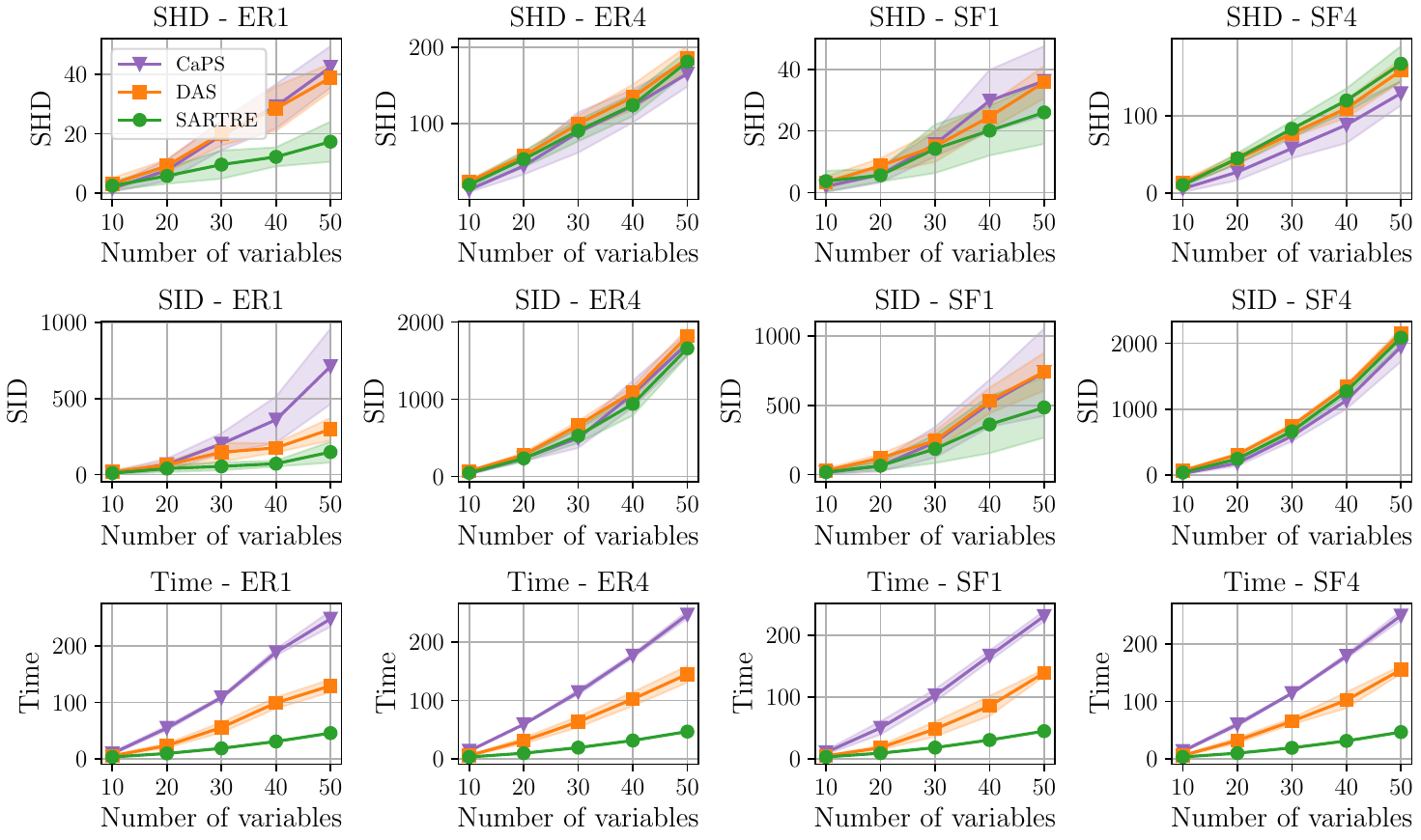}
    }
    \subfigure[Samples]{
        \includegraphics[width=0.9\linewidth]{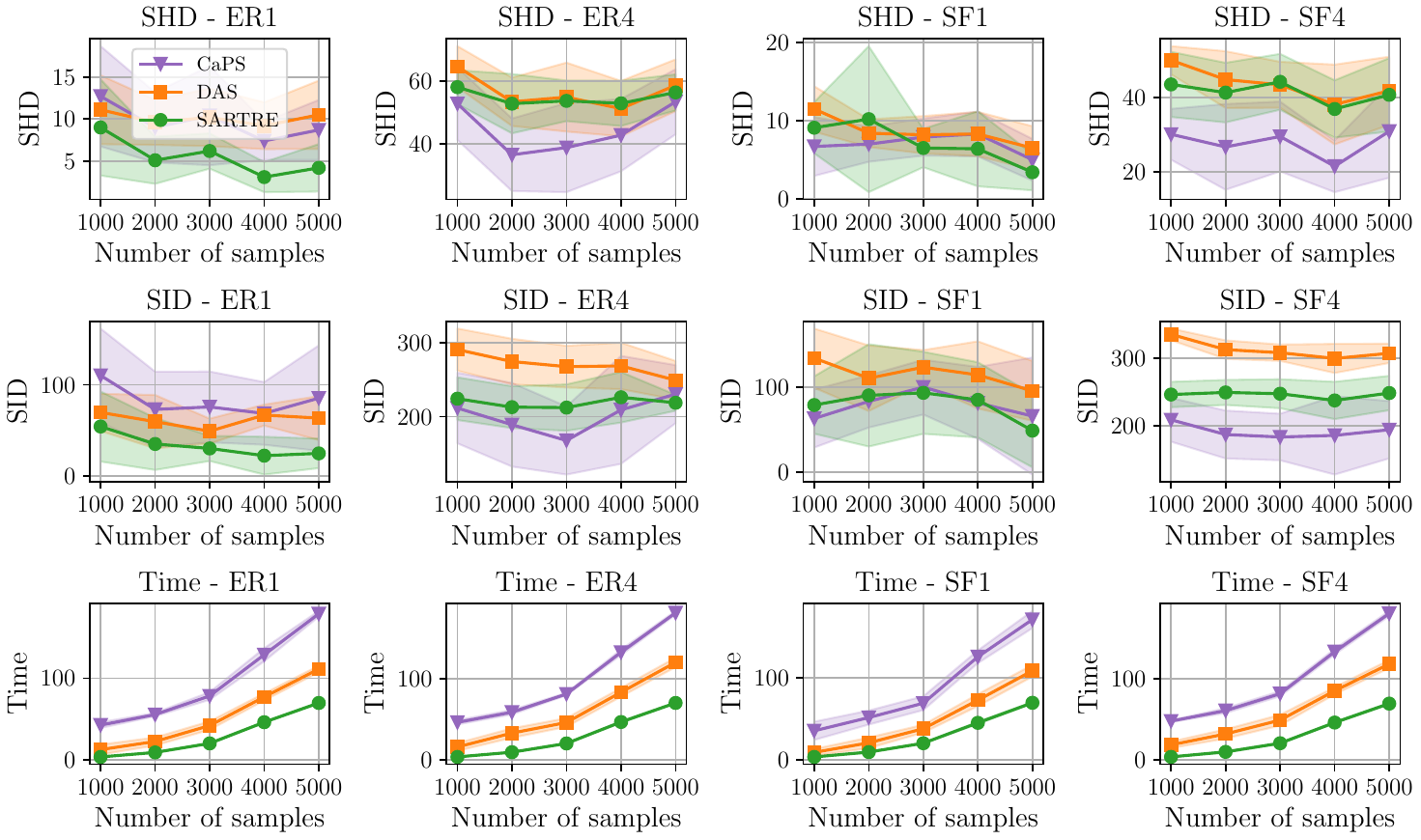}
    }
    \caption{
        Experimental results of the mixed cases. 
        We generated each link function $f_i$ as either a linear function or a nonlinear function with probability $0.5$. 
        The shaded areas indicate the standard deviations over $10$ trials.
        While we varied the number of variables $d$ from $10$ to $50$ and fixed the number of samples $n$ to $2000$ in (a),
        we varied the number of samples $n$ from $1000$ to $5000$ and fixed the number of variables $d$ to $20$ in (b). 
        We observed that our SARTRE performed well even when the link functions are a mixture of linear and nonlinear functions. 
    }
    \label{fig:appendix:exp:mixed}
\end{figure*}
\begin{figure*}[p]
    \centering
    \includegraphics[width=0.95\linewidth]{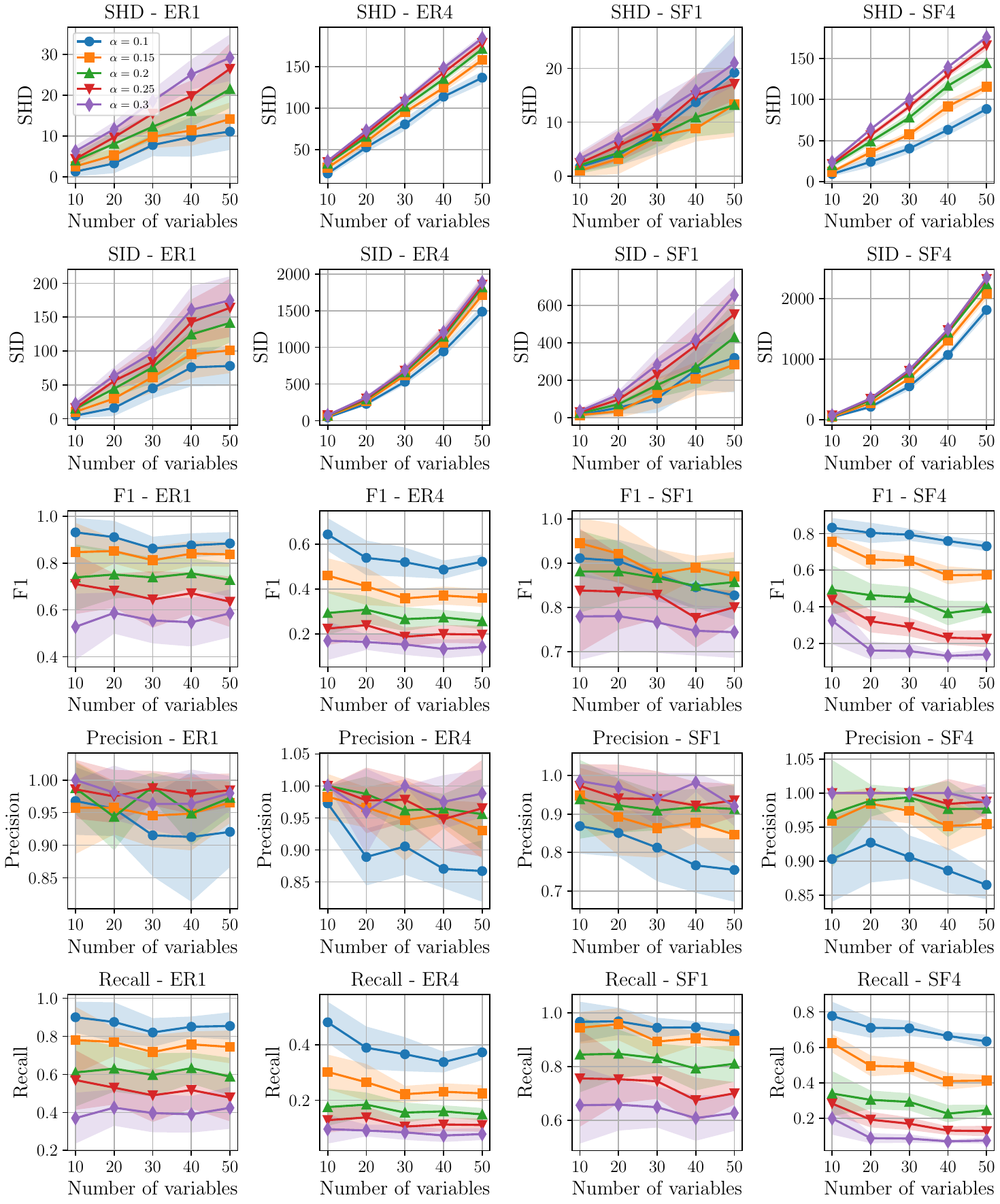}
    \caption{
        Experimental results of the sensitivity analysis on the regularization parameter $\lambda$. 
        The shaded areas indicate the standard deviations over $10$ trials.
        We varied the number of variables $d$ from $10$ to $50$ and fixed the number of samples $n$ to $2000$. 
    }
    \label{fig:appendix:exp:sensitivity:feature}
\end{figure*}
\begin{figure*}[p]
    \centering
    \includegraphics[width=0.95\linewidth]{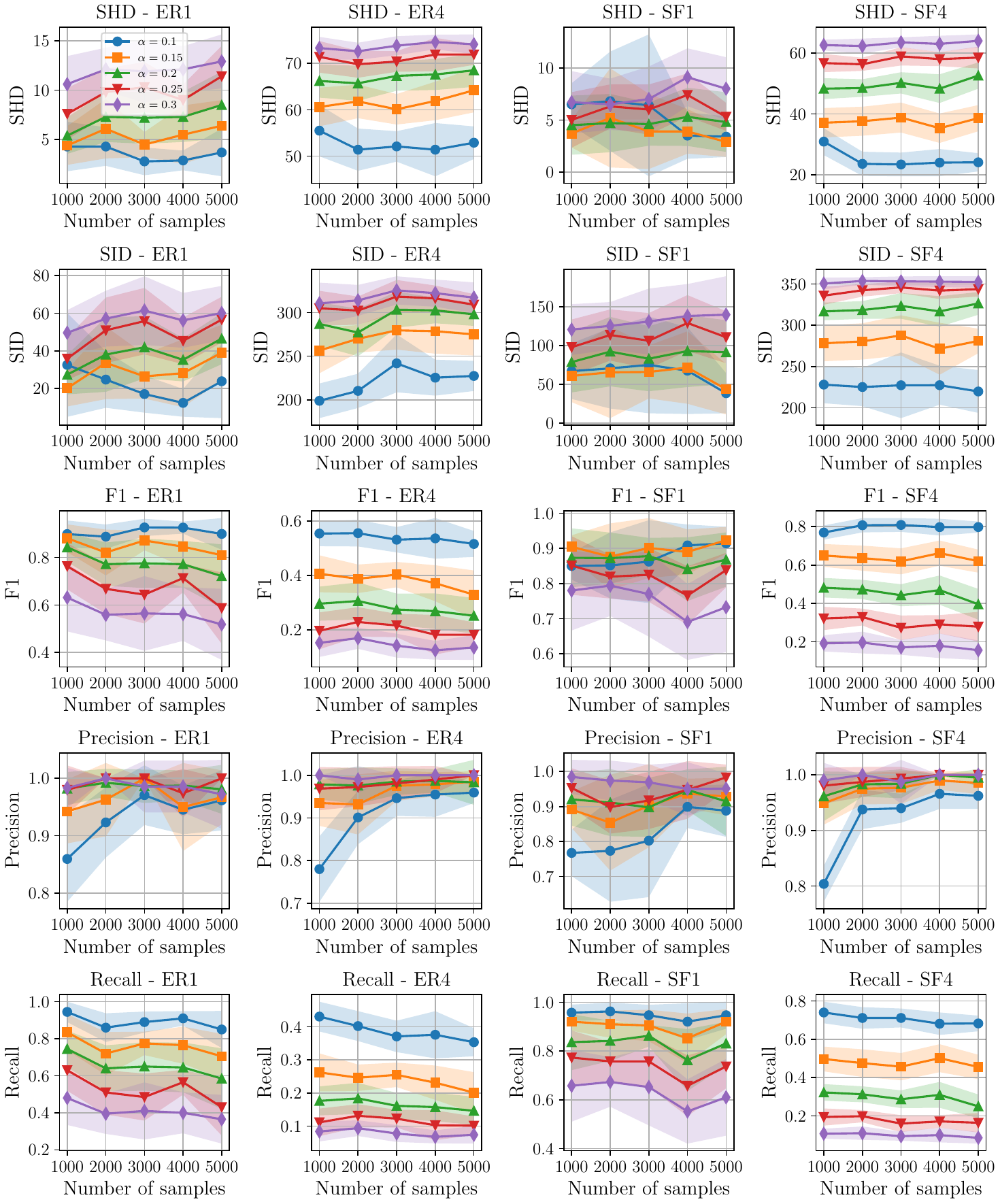}
    \caption{
        Experimental results of the sensitivity analysis on the regularization parameter $\lambda$. 
        The shaded areas indicate the standard deviations over $10$ trials.
        We varied the number of samples $n$ from $1000$ to $5000$ and fixed the number of variables $d$ to $20$. 
    }
    \label{fig:appendix:exp:sensitivity:sample}
\end{figure*}

\end{document}